\newcolumntype{C}[1]{>{\centering\let\newline\\\arraybackslash\hspace{0pt}}m{#1}}
\newcommand{\slime}{{\textsc{s-LIME}}}
\newcommand{\neighbors}{$\hat{\mathcal{X}}$}
\newcommand{\EE}{\mathds{E}} %expectation
\begin{document}

\title{\slime: Reconciling Locality and Fidelity in Linear Explanations}

\author{
Romaric Gaudel\inst{1},
Luis Galárraga\inst{2},
Julien Delaunay\inst{2},
Laurence Rozé\inst{3},
Vaishnavi Bhargava\inst{4}
}
\authorrunning{Gaudel et al.}

\institute{ (corresponding author) Univ. Rennes, Ensai, CNRS, CREST, Rennes, France 
\email{romaric.gaudel@ensai.fr}\\
\and
Univ. Rennes, Inria, Irisa, France \email{\{julien.delaunay,luis.galarraga\}@inria.fr}\\
\and
Univ. Rennes, Insa, Inria, Irisa, Rennes, France
\email{laurence.roze@insa-rennes.fr} \\
\and
(during research) Inria/Irisa, Rennes, France~\email{vaishnavi.bhargava2605@gmail.com}
}

%
%\authorrunning{Gaudel et al.}
%ENSAI, France
%\email{romaric.gaudel@ensai.fr} \\
%\and 
%\institute{Inria/Irisa, France
%\email{\{vaishnavi.bhargava,julien.delaunay,luis.galarraga\}@inria.fr}\\
%\and
%Insa/Irisa, France
%\email{romaric.gaudel@ensai.fr} \\
%\and 
%\institute{??}
%\email{vaishnavi.bhargava@gmail.com}\\
%}

\maketitle
% IDA'22: Authors of accepted papers are allowed to use one additional page (so papers up to 13 pages) 

\begin{abstract}
The benefit of locality is one of the major premises of LIME, one of the most prominent methods to explain black-box machine learning models. This emphasis relies on the postulate that the more locally we look at the vicinity of an instance, the simpler the black-box model becomes, and the more accurately we can mimic it with a linear surrogate. As logical as this seems, our findings suggest that, with the current design of LIME, the surrogate model may degenerate when the explanation is too local, namely, when the bandwidth parameter $\sigma$ tends to zero. %This phenomenon arises also when the vicinity used to learn the surrogate model is poorly defined.
Based on this observation, the contribution of this paper is twofold. Firstly, we study the impact of both the bandwidth and the training vicinity on the fidelity and semantics of LIME explanations. Secondly, and based on our findings, we propose \slime, an extension of LIME that reconciles fidelity and locality. 
%Finally, we propose a stable solution to compute the importance weights associated to the features in an explanation.

%Based on these observations, the contribution of this paper is four-fold. Firstly, we revisit the semantics of the explanations delivered by LIME with respect to the properties of the black-box model. Secondly, we study the impact of both the kernel width and the training vicinity on the fidelity of explanations. Thirdly, we propose a framework to help users calibrate LIME more wisely, as well as to assess its viability as explanation method for a given scenario. Finally, we propose a stable solution to compute the importance weights associated to the features in an explanation.

%One of the ingredients of the framework \emph{Local Interpretable Model-Agnostic Explanations} (LIME) is the \emph{locality} which enables the derivation of a surrogate model both interpretable and faithful with the black-box model to interpret. Intuitively, the more we look locally, the simplest the black-box model. However, with the current design of LIME, the surrogate model degenerates when the band-with parameter controlling the locality tends to zero. In current paper we demonstrate how the band-with parameter more generally greatly impacts the quality of the output of LIME. We also propose a framework to help the user to set the right value for the band-with. Finally, we propose an alternative solution to compute the weight of interpretable features which proves to be more stable.
\end{abstract}

\keywords{Explainable AI  \and Interpretability}

\section{Introduction}
The pervasiveness of complex automatic decision-making nowadays 
%in almost every domain of human activity 
has raised multiple concerns about the implications of AI for the values of fairness, trust, transparency, and privacy
\cite{bodria2021benchmarking,AI-under-law,remote-explainability}.
%\cite{bodria2021benchmarking,quantifying-interpretability,right-level-explanation,counterfactual-without-opening,remote-explainability,AI-under-law}.
These concerns have propelled a plethora of work in explainable AI, a domain concerned with the design of models that can provide high-level comprehensive explanations for their answers. These models can be either explainable-by-design, or rely on external modules that compute explanations \emph{a posteriori}. This need for post-hoc explainability is particularly compelling for sophisticated machine learning models, e.g., neural networks, whose logic is perceived as a black box by lay users.  

One of the most prominent modules to compute post-hoc explanations for black-box supervised ML models is LIME~\cite{lime}. This approach builds upon the notion of \emph{local feature attribution} via a \emph{linear surrogate}. Feature attribution means that the explanation quantifies the contribution of a set of features to the black box's answer. This allows users to build a ranking of the features that play the biggest role in the model's logic. 
We say the explanation is local because it only holds for a \emph{target instance} and its vicinity. By focusing on a region of the feature space, LIME reduces the complexity of the black box and can approximate it using a surrogate sparse linear function whose coefficients constitute the feature attribution scores of the explanation.
To learn this surrogate, LIME constructs a training set by generating artificial instances -- called neighbors -- around the target instance, and labeling them using the black box. The neighbors may not lie in the original feature space, but rather on a \emph{surrogate space} that is meaningful to humans, e.g., image segments instead of pixels for images.
The neighbors are weighted using an exponential kernel that depends on the distance to the target and a \emph{bandwidth} parameter $\sigma \in \mathbb{R}^+$. The weighting process controls the level of locality of the explanation:
the smaller $\sigma$ is, the more local the explanation becomes as closer neighbors are weighted higher than farther ones. More locality also implies focusing on a smaller region where the black box is presumably easier to approximate. 

As logical as this sounds, our experiments suggest that small values of $\sigma$ can yield unfaithful or even trivially empty explanations. This counter-intuitive result has thus motivated this work, which brings two contributions: (a) A study of the impact of the bandwidth and the training vicinity on the fidelity and semantics of LIME, namely the meaning of the feature attribution scores\footnote{By \emph{semantics of LIME}, we mean the information carried by the feature attribution scores.}; and (b) \slime{}, an extension of LIME that can solve the locality-fidelity paradox.
%where we study the influence of both the degree of locality and the surrogate space on the fidelity and semantics of LIME explanations. In addition, we propose a solution to this paradox in order to reconcile locality and fidelity in LIME. All in all, our contributions are:
%\begin{enumerate}
    %\item A discussion of the semantics of the explanations delivered by LIME in contrast to other linear explanation methods such as SHAP~\cite{shap}, and DeepLift~\cite{deeplift}
%    \item A study of the impact of the bandwidth and the training vicinity on the fidelity and semantics of LIME; 
%   \item \slime{}, an extension of LIME that can solve the locality-fidelity paradox. 
%    \item A stable solution to compute the coefficients in LIME explanations
%\end{enumerate}

This paper is structured as follows. In Section~\ref{sec:preliminaries} we introduce some background concepts and notations. We elaborate on our contributions in Sections~\ref{sec:effects-locality} and~\ref{sec:smoothed-lime}.
Section~\ref{sec:experiments} presents an experimental evaluation of \slime. In Section~\ref{sec:related-work-julien} we survey the state of the art. Section~\ref{sec:conclusion} concludes the paper.

\section{Preliminaries}
\label{sec:preliminaries}
%\subsection{Black Boxes and Local Linear Explanations}
%This paper is about explaining black-box models and functions using local surrogate functions as explanations. 
\paragraph{Black Boxes and Linear Surrogates. } We assume our black box is a classification function $f : \mathbb{R}^d \rightarrow \mathbb{R}$ ($d \in \mathbb{Z}^+$) that predicts the probability that a target instance $x \in \mathbb{R}^d$ belongs to a given class. We denote by $x[i]$ the $i$-th feature of $x$. Conversely, the explanation $g: \mathbb{R}^{\hat{d}} \rightarrow \mathbb{R}$ ($\hat{d} \in \mathbb{Z}^+$) is a linear surrogate function that approximates $f$ in the locality of $x$, i.e., $g(\hat{x}) = \hat{\alpha}_0 + \sum_{1 \le i \le \hat{d}}{\hat{\alpha}_i \hat{x}[i]}$. Note that $g$ may be defined on a \emph{surrogate space} different from $f$'s. %For example, explanations for image classifiers may be defined on the contributions of image segments (called super-pixels in~\cite{lime}) to the classifier's answer, whereas the original classifier may actually represent images as arrays of pixels or feature maps. 
This implies the existence of a conversion function $\eta_x: \mathbb{R}^{\hat{d}} \rightarrow \mathbb{R}^{d}$ from the surrogate to the original space. 
%Likewise, ML models on text can be explained on a surrogate space defined by the presence/absence of individual words. 

%We also highlight that the conversion function $\eta$ is not necessarily bijective, put differently, an instance $\hat{x} \in \mathbb{R}^{\hat{d}}$ in the surrogate space could be associated to an infinite number of instances $x$ in the original space $\mathbb{R}^{d}$. 

%\subsection{Linear Explanations}
%\label{subsec:linear-explanations}
%An explanation based on linear attribution provides the \emph{individual} contributions of a set of interpretable features to the answer of a black box $f$ on a target instance $x$. This is achieved by learning a linear surrogate model of the form $g(\hat{x}) = \hat{\alpha}_0 + \sum_{1 \le i \le \hat{d}}{\hat{\alpha}_i \hat{x}[i]}$ so that $\hat{\alpha}_i$ ($i\ge1$) denotes the contribution of the $i$-th feature to the value of $f(x)$. The meaning of $\hat{\alpha}_i$ depends both on the method to learn the surrogate and the properties of the black box. We elaborate on those semantics for the most relevant approaches for linear explanations in the next section.

%\subsection{LIME}
%\label{subsec:lime}
\paragraph{LIME.}
In \cite{lime}, the authors propose a model-agnostic method to compute local explanations
for ML models in the form of sparse linear surrogates. LIME learns an explanation $g$ for a black box $f$ and an instance $x$ by solving the following minimization problem:
%\begin{equation} g = \mathop{argmin}_{g \in \mathcal{G}}{\;\mathcal{L}_x(f, g)} + \Omega(g) \;\; \text{s.t. } \lVert \hat{\alpha_{g}} \rVert_0 < k \label{eq:lime} \end{equation}
\begin{equation} g = \mathop{argmin}_{g \in \mathcal{G}:\; \lVert \hat{\bm{\alpha}} \rVert_0 \leqslant k}{\;\mathcal{L}_x(f, g)} \label{eq:lime} \end{equation}
%\commentr{alpha en gras}
\noindent In other words, the surrogate $g$ is chosen such that it minimizes the error $\mathcal{L}_x$ w.r.t. the answers of $f$ on a neighborhood $\mathcal{X}$ around a target instance $x$.
To keep the explanation meaningful to humans, LIME restricts itself to surrogate functions $g$ with less than $k$ non-zero parameters, where $k$ is a user-configurable hyper-parameter set by default to $6$.
LIME does not assume access to the training data of the black box\footnote{The exception to this rule is its implementation for tabular data.}, therefore the neighbors $z\in\mathcal{X}$ take the form $z=\eta_x(\hat{z})$ where $\hat{z}\in \hat{\mathcal{X}}\subseteq\{0, 1\}^{\hat{d}}$ is a synthetic instance that lies on a binary space. This space is interpretated as the presence or absence of features of the target $x$, so that $x=\eta_x(\hat{x})$ with $\hat{x} = \mathds{1}^{\hat{d}}$. The neighbors in $\hat{\mathcal{X}}$ are obtained by toggling off bits in $x$'s binary representation $\hat{x}$. When a bit is set to zero in the surrogate space, the conversion function $\eta_x$ must map the resulting vector to the original space. For images, this can be achieved by replacing the toggled-off super-pixels with a baseline monochrome segment or with a patch from another image \cite{anchors}.
%For instance, if the black-box $f$ is a classifier for RGB images, those surrogate features could consist of predefined super-pixels.
%Note that the target instance $x$ is represented by the vector $\hat{x} = \mathds{1}_{\hat{d}}$.
%
%The neighbors in $\mathcal{X}$ do not have the same importance when learning the surrogate. 
%Instead, 
LIME weighs the neighbors in $\hat{\mathcal{X}}$ according to a kernel function $\pi^{\sigma}_x$ (based on a distance $D$ and a \emph{bandwidth} hyper-parameter $\sigma \in \mathbb{R}^+$) on the surrogate space, that is,
%Finally, the term $\Omega(g)$ in Equation~\eqref{eq:lime} penalizes complex surrogate models. 
%In summary: 
%LIME sets $\mathcal{L}_x$ to a weighted squared loss, the surrogate neighbors $\hat{z}$ to random samples in $\{0, 1\}^{\hat{d}}$, $\pi_x$ to an exponentially decreasing kernel on a distance $D$, and $\Omega$ to a function that penalizes linear models with more than $k$ non-zero coefficients, that is: \\
%\[ \mathcal{L}_x(f, g) = \sum_{\hat{z}\in\hat{\mathcal{X}}}{\pi_x(\hat{z})(f(\eta_{x}(\hat{z})) - g(\hat{z}))^2} \]\vspace{-1em} 
\begin{align*}
    %\pi^{\sigma}_x(\hat{z}) &= \exp\left(\nicefrac{-D(\hat{x}, \hat{z})^2}{\sigma^2}\right)
    \mathcal{L}_x(f, g) &= \sum_{\hat{z}\in\hat{\mathcal{X}}}{\pi_x^\sigma(\hat{z})(f(\eta_{x}(\hat{z})) - g(\hat{z}))^2},
    &
    \text{with } \pi^{\sigma}_x(\hat{z}) &= \exp\left(\nicefrac{-D(\hat{x}, \hat{z})^2}{\sigma^2}\right).
    %\Omega(g) &= \infty \mathbbm{1}\lbrack\;|F_g | > k \; \rbrack
    %\Omega(g) &= \lVert \hat{\alpha}_g \rVert_0 - k 
\end{align*} 
The hyper-parameter $\sigma$  controls the locality of the explanation so that smaller values give more weight to the instances that lie close to $\hat{x}$, i.e., those instances with fewer toggled-off bits. 
%\noindent The neighbors in $\hat{\mathcal{X}}$ are obtained by toggling off bits in the target's binary representation $\hat{x}$. When a bit is set to zero in the surrogate space, the conversion function $\eta_x$ must map the resulting vector to the original space. For images, this can be achieved by replacing the toggled-off super-pixels with a baseline grayed out segment\footnote{Some other alternatives are proposed in \cite{anchors}.}.
LIME does not make any assumptions about the inner-workings of $f$, however the distance $D$ and the conversion functions $\eta_x$ depend on $f$'s original space, which at the same time depends on the instances' data type. 
% Moreover,~\cite{lime} does not discuss about the influence of other arguments such that the bandwidth $\sigma$, the distance $D$, or the neighborhood \neighbors. 

\paragraph{Quality Metrics. }
The quality of the local surrogate $g$ is evaluated in terms of its \emph{fidelity}, which can be measured via the surrogate's adherence to the black box $f$ in the vicinity of $x$. Adherence is usually measured via the coefficient of determination R$^2$~\cite{iLIME,ALIME,OptiLIME}. The R$^2$ score %In a setting where the classifier outputs probabilities,
measures the similarity between the predictions of both functions, compared to the variance of the black-box prediction. This coefficient lies in $(-\infty, 1]$, where $R^2=1$ means $g$ fits $f$ perfectly and $R^2=0$ (respectively $R^2<0$) implies that $g$ is as accurate as (resp. less accurate than) the best constant model.
%This coefficient is smaller than 1, with $R^2=1$ meaning that the $g$'s prediction is equal to the black-box one, and $R^2=0$ (respectively $R^2<0$) meaning that $g$ makes the same error w.r.t. $f$ as (resp. worse error than) the best constant model.
%Hence, the greater $R^2$, the better.
%For pure classification tasks, the fidelity can be the accuracy or the AUC-ROC of $g$ w.r.t. the output of $f$. For regression problems or classification tasks that predict the likelihood of belonging to a class, 
%which can be measured via the error w.r.t. the black box's predictions in the locality where the surrogate was learned. We argue that unbounded error metrics -- such as the RMSE -- are not suitable to quantify fidelity unless they are provided with a baseline. To that end, we propose the R$^2$ score as quality metric. 
%That way, values close to 1 denote a good linear fit. 

\noindent When a gold standard set $F_f(x)$ of important features is available, we can also calculate fidelity as the agreement between the explanation and the gold standard. This can be quantified via metrics such as \emph{recall}~\cite{lime}, \emph{precision}, or \emph{coverage}~\cite{leftist}. 
Assuming the surrogate and the original feature spaces are identical, if the explanation $g$ for the target instance $x$ reports features $F_g(x)$ as the most important, the recall and precision of $g$ are respectively $\frac{|F_f(x) \cap F_g(x)|}{|F_f(x)|}$  and $\frac{|F_f(x)\cap F_g(x)|}{|F_g(x)|}$.
Coverage can be used for data types where segments, i.e., conglomerates of contiguous features, are more meaningful to humans than individual features. Examples are time series and images. For those cases, the coverage is the proportion of the gold standard regions that overlap with the regions reported by the surrogate. 
Further specialized metrics have been proposed to measure the fidelity of pixel attribution explanations for image classifiers~\cite{jia2021studying}.

%\section{Related Work}
%\label{sec:related-work}
%\input{related-work}

\section{Locality vs. Fidelity}
\label{sec:effects-locality}

\begin{figure*}[t!]
	\centering
%	\begin{subfigure}[b]{0.15\textwidth}
%		\includegraphics[width=\textwidth]{prediction_probabilities.png}
%		\caption{{\footnotesize Prediction}}
%		\label{fig:prediction_probabilities}
%	\end{subfigure}
	\begin{subfigure}[b]{0.32\textwidth}
		\includegraphics[width=\textwidth]{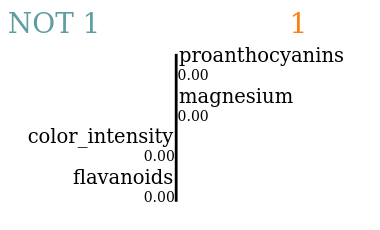}
		\caption{$\sigma=0.1$}
		\label{fig:first_explanation}
	\end{subfigure}
	\begin{subfigure}[b]{0.32\textwidth}
		\includegraphics[width=\textwidth]{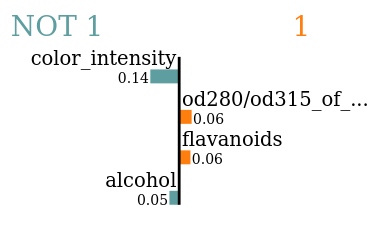}
		\caption{$\sigma=0.75$}
		\label{fig:second_explanation}
	\end{subfigure}
	\begin{subfigure}[b]{0.32\textwidth}
		\includegraphics[width=\textwidth]{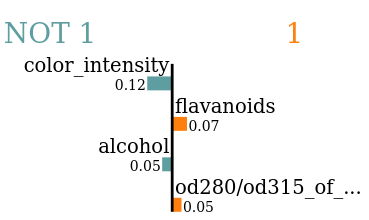}
		\caption{$\sigma=100$}
		\label{fig:third_explanation}
	\end{subfigure}
	\caption{LIME explanations for three different bandwidths on the same instance of the wine dataset ($k=4$).}
	\label{fig:explanations}
\end{figure*}

In this section we study the impact of two important elements of LIME on the fidelity and semantics of explanations, namely the bandwidth $\sigma$ and the neighborhood \neighbors.  
%By default, LIME outputs the 6 features with the highest absolute weight, however the explanation quality is calculated on the entire set of features in the surrogate model.

\subsection{The Paradox of Small Bandwidth}
\label{sec:impact_kernel}
We illustrate the impact of $\sigma$ on the output of the tabular variant of LIME\footnote{The discretization is off, hence the classifier and the explanation operate in the same space.}, which we use to explain a random forest classifier trained on the UCI wine dataset\footnote{\url{https://archive.ics.uci.edu/ml/datasets/wine}}. %The classifier distinguishes among three wine types based on the results of 13 chemical analyses. 
%For illustration purposes, we resort to a simplified implementation of tabular LIME on synthetic 2D data\footnote{The datasets are generated using the functions \texttt{make\_blobs} and \texttt{make\_moons} of scikit-learn} for binary classification.
Tabular LIME sets $\sigma=0.75$ with no further explanation. Changing $\sigma$ can, however, drastically change the resulting explanation as depicted in Figure~\ref{fig:explanations}. In particular, LIME computes null attribution coefficients when $\sigma=0.1$. Changing $\sigma$ from 0.75 to 100 rearranges the attribution ranking of the features.  
%The charts illustrate three LIME explanations on the same instance when $\sigma=\{0.1, 0.75, 4\}$.

To investigate the cause of this instability, we measure the adherence of the surrogate in \neighbors~as $\sigma$ varies 
%in $\{10^{-4}, \dots, 10^{4} \}$ 
for all the test instances of the dataset. 
%We plot the R$^2$ score of the LIME surrogate in a test weighted neighborhood \neighbors~for two of those instances in Figure~\ref{fig:first_example}, where instance 2 corresponds to the instance explained in Figure~\ref{fig:explanations}. The score is calculated on a test set of artificial neighbors generated by LIME.    
We plot the results for two instances in Figure~\ref{fig:first_example}, where instance 2 is the example explained in Figure~\ref{fig:explanations}.
%\begin{figure*}[t!]
%	\centering
%	\begin{subfigure}[b]{0.48\textwidth}
%	\footnotesize
%		\includegraphics[width=0.9\textwidth,height=3cm]{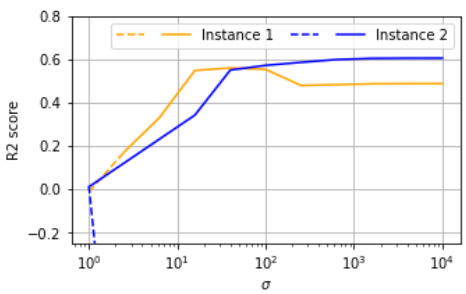}
%		\caption{R$^2$ vs. $\sigma$}
%		\label{fig:first_example}
%	\end{subfigure} \smallskip  
%	\begin{subfigure}[b]{0.48\textwidth}
%	\subfloat[$\sigma=0.1$]{
%   \includegraphics[width=0.8\columnwidth]{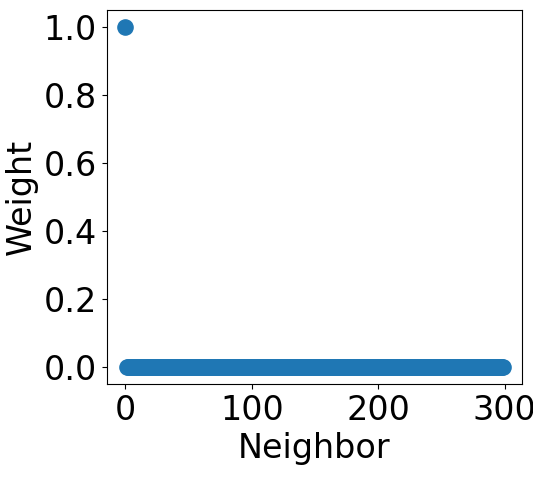}
%        	\label{fig:weights_sigma_01}
%    }
    
%   \subfloat[$\sigma=100$]{
%    \includegraphics[width=0.8\columnwidth]{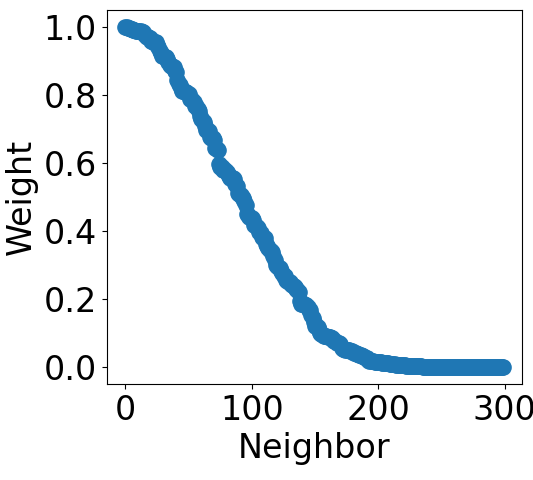}
%    \label{fig:weights_sigma_100}
%    }
	%\caption{Distribution of the weights for the LIME neighborhood of instance 2%}
%	\end{subfigure}
%	\caption{Left: Impact of the bandwidth $\sigma$ on the R$^2$ score of LIME for two instances of the wine dataset. Right: Distribution of the neighborhood weights for instance 2.}
%	\label{fig:examples}
%\end{figure*}

\begin{figure*}[t!]
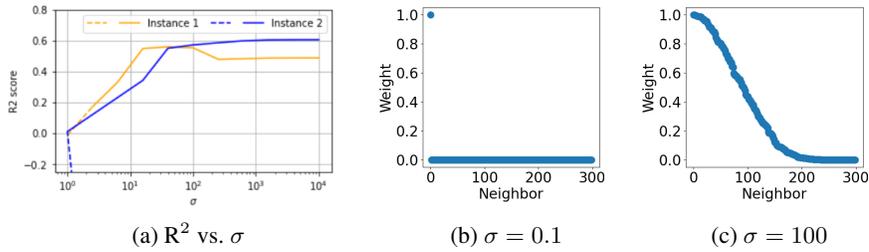

	\centering
	\begin{subfigure}[b]{0.40\textwidth}
	\footnotesize
		\includegraphics[width=0.9\textwidth]{instances_1_and_2.png}
		\caption{R$^2$ vs. $\sigma$}
		\label{fig:first_example}
	\end{subfigure} % \smallskip  
	\begin{subfigure}[b]{0.28\textwidth}
    \includegraphics[width=0.88\columnwidth]{weights_sigma_0_1.png}
        \caption{$\sigma=0.1$}

        	\label{fig:weights_sigma_01}
    \end{subfigure}
	\begin{subfigure}[b]{0.28\textwidth}    
    \includegraphics[width=0.88\columnwidth]{weights_sigma_100.png}
    \caption{$\sigma=100$}
    \label{fig:weights_sigma_100}
    
	\end{subfigure}
	\caption{Left: Impact of the bandwidth $\sigma$ on the R$^2$ score of LIME for two instances of the wine dataset. Right: Distribution of the neighborhood weights for instance 2.}
	\label{fig:examples}
\end{figure*}

We recall that the R$^2$ score is calculated as $1 - \nicefrac{v_r(g)}{v(f)}$, where $v_r(g)$ is the residual sum of squares of the surrogate $g$ and $v_r(f)$ is the total sum of squares of $f$'s answers. 
This means that the surrogate accounts for no more than 60\%
%the R$^2$ is no higher than 0.6 for our example instances. Similar results are observed when we average across all test instances. This means that the surrogate models learned by LIME account for at most 60\%
of the variability of the black box in $\hat{\mathcal{X}}$. The dashed regions of the curves indicate that the surrogate model has degenerated into a set of zero weights. This points out a counter-intuitive phenomenon: higher locality -- achieved by making $\sigma$ small -- yields poor explanations. We also observe that the R$^2$ may not increase monotonically with $\sigma$. Based on these observations, we devise two research questions that drive our contribution: (i) Why do seem locality and fidelity in opposition?, and (ii) what makes a good LIME explanation? 
%\begin{enumerate}
%    \item Why do seem locality and fidelity in opposition?
%    \item What makes a good LIME explanation?
%    \item How local should a good explanation be?  
%\end{enumerate}

\subsection{Why do Seem Locality and Fidelity in Opposition?} 
We investigate the cause of this paradox by means of Figures~\ref{fig:weights_sigma_01} and \ref{fig:weights_sigma_100} that depict the distribution of weights for the neighbors of instance 2 for $\sigma=0.1$ and $\sigma=100$. In the first case, the LIME surrogate is a degenerated model that predicts a constant as hinted by Figure~\ref{fig:first_example} and its corresponding explanation in Figure~\ref{fig:first_explanation}. Figure~\ref{fig:weights_sigma_01} tells us that the bulk of the weights is concentrated on the target instance. Such a phenomenon leads to a trivial training set. Even though locality is defined in terms of the entire set of instances in $\hat{\mathcal{X}}$, almost all of them are dispensable because they do not have any influence when learning the surrogate. The situation is less skewed for $\sigma=100$ (Figure \ref{fig:weights_sigma_100}), which yields the non-trivial explanation in Figure~\ref{fig:third_explanation}. 

From this analysis we conclude that the selection of $\sigma$ and the construction of $\hat{\mathcal{X}}$ must go in hand. We thus propose a strategy to jointly select them in Section~\ref{sec:smoothed-lime}.

%\begin{figure*}[hbt]
%	\centering
%	\begin{subfigure}[b]{0.48\textwidth}
%		\includegraphics[width=\textwidth]{weights_sigma_0_1.png}
%		\caption{$\sigma=0.1$}
%		\label{fig:weights_sigma_1}
%	\end{subfigure}
%	\begin{subfigure}[b]{0.48\textwidth}
%		\includegraphics[width=\textwidth]{weights_sigma_10.png}
%		\caption{$\sigma=10$}
%		\label{fig:weights_sigma_2}
%	\end{subfigure}
%	\caption{Distribution of the weights for the neighborhood of instance 2 in Figure~\ref{fig:second_example}. \emph{x-axis}: instances in \neighbors~sorted by weight. \emph{y-axis}: weight assigned by the kernel $\pi_{\hat{\mathcal{X}}}$ }
%	\label{fig:weights}
%\end{figure*}

%\begin{figure*}[hbt]
%	\centering
%	\begin{subfigure}[b]{0.48\textwidth}
%		\includegraphics[width=\textwidth]{artificial_bbox_2_average}
%		\caption{}
%		\label{fig:artificial_bbox_2_average}
%	\end{subfigure}
%	\begin{subfigure}[b]{0.48\textwidth}
%		\includegraphics[width=\textwidth]{artificial_bbox_2_gaussian}
%		\caption{}
%		\label{fig:artificial_bbox_2_gaussian}
%	\end{subfigure}
%	\caption{Two LIME explanations for the black box and instance in Figure~\ref{fig:artificial_bbox_2}. On the left: degenerated explanation using a $\eta$ conversion function that maps zeros in the surrogate space to constant values in the original space. On the right: the conversion function $\eta$ maps zeros in the surrogate space to values sampled according to a Gaussian distribution. }
%	\label{fig:artificial_bbox_2_explanations}
%\end{figure*}

\subsection{What Makes a Good LIME Explanation?}
\begin{figure*}[t!]
    \captionsetup[subfigure]{aboveskip=-1pt,belowskip=-1pt}
	\centering
	\begin{subfigure}[b]{0.32\textwidth}
		\includegraphics[width=\textwidth]{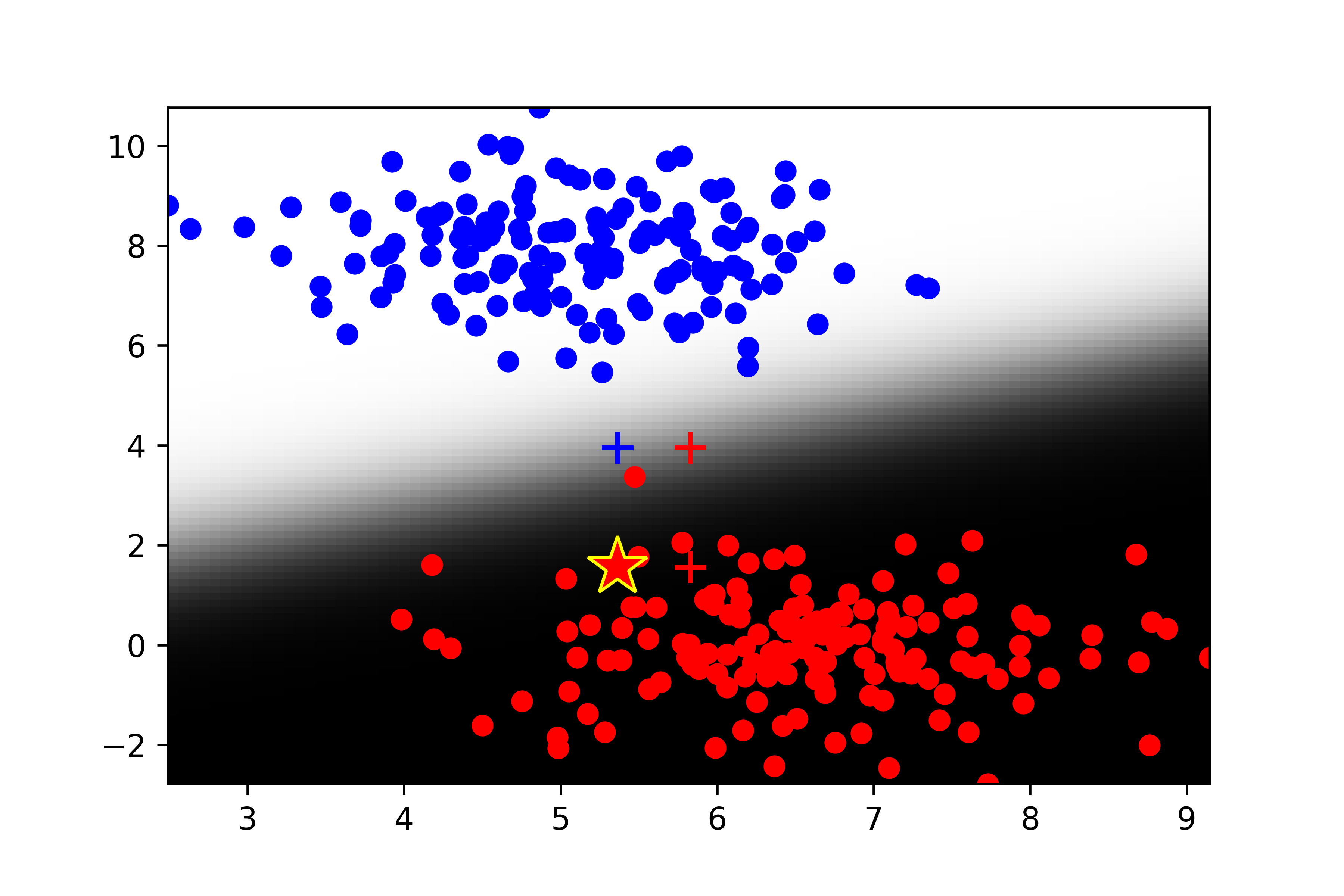}
		\caption{Logistic Regression}
		\label{fig:artificial_bbox_1}
	\end{subfigure} %\quad
	\begin{subfigure}[b]{0.32\textwidth}
	\includegraphics[width=\textwidth]{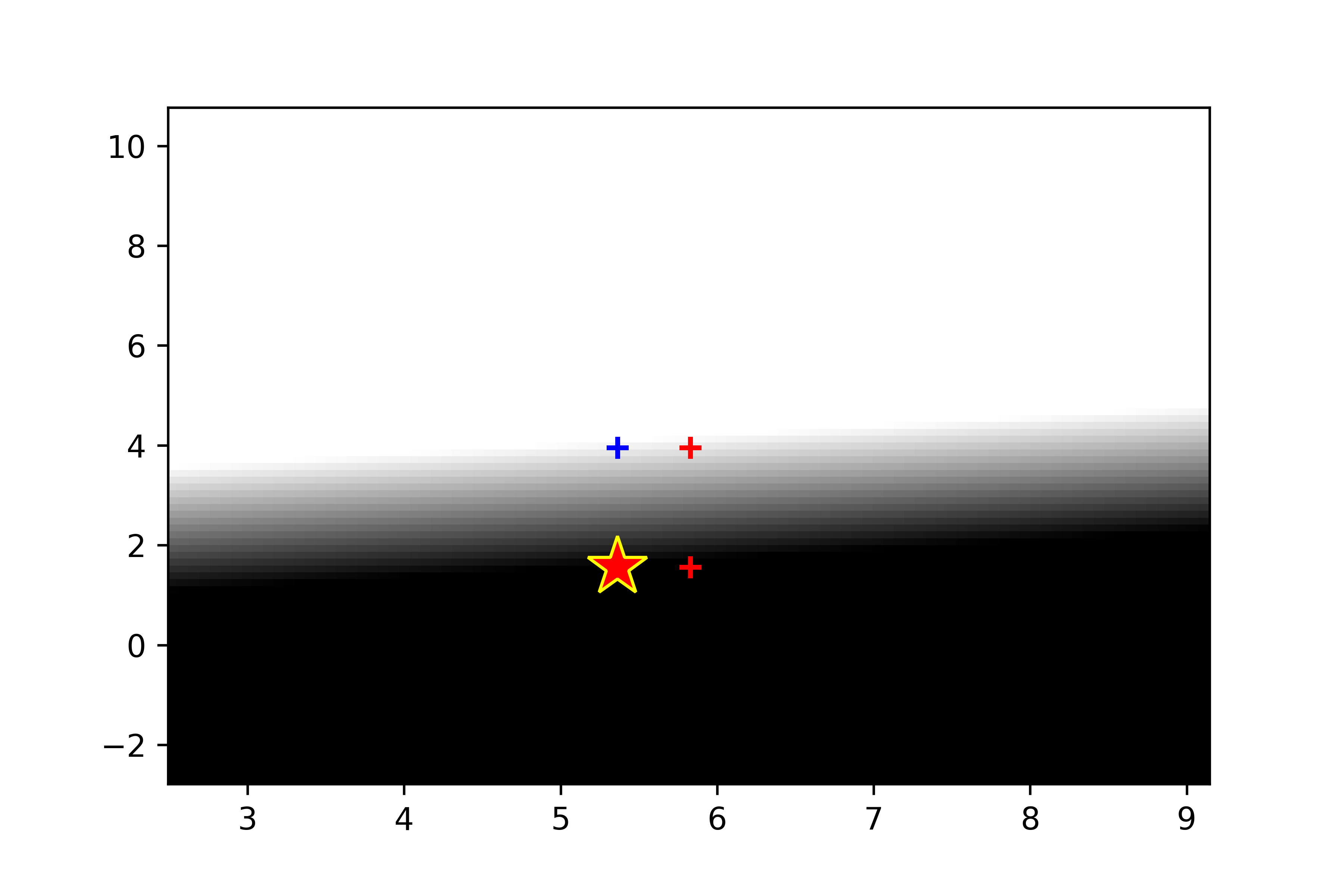}
	\caption{$\sigma=0.5$}
    \label{fig:artificial_bbox_1_sigma_0.5}
	\end{subfigure}	
	\begin{subfigure}[b]{0.32\textwidth}
	\includegraphics[width=\textwidth]{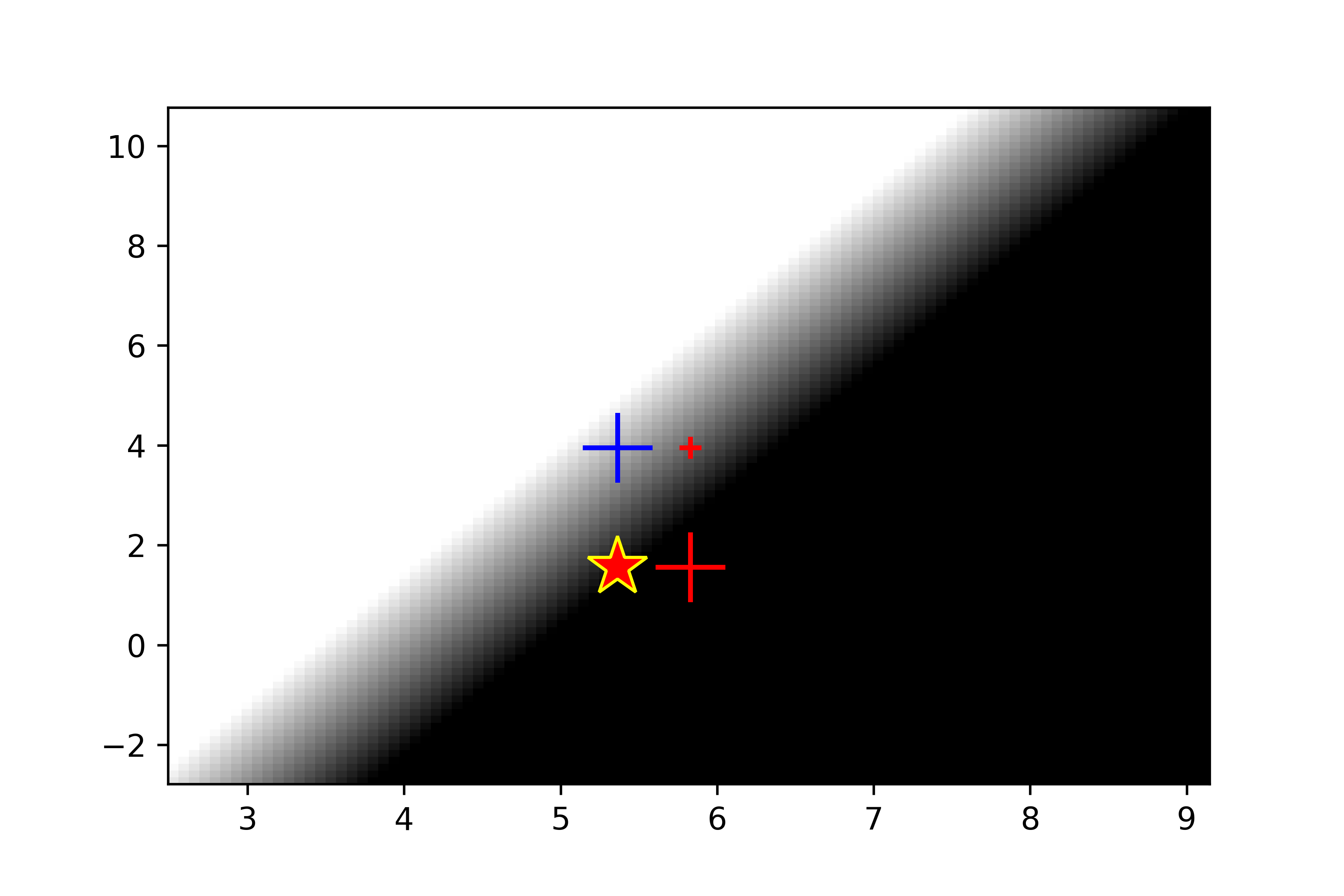}
		\caption{$\sigma=1.0$}
    \label{fig:artificial_bbox_1_sigma_1}
	\end{subfigure}		
%	    \subfloat[$\sigma=0.5$]{
%		    \includegraphics[clip,width=0.8\columnwidth,height=2.5cm]{artificial_bbox_1_sigma_1.png}
%		    \label{fig:artificial_bbox_1_sigma_1}
%		}\vspace{-0.08cm}

%	    \subfloat[$\sigma=1.0$]{
 %   		\includegraphics[clip,width=0.8\columnwidth,height=2.5cm]{artificial_bbox_1_sigma_2}
  %  	\label{fig:artificial_bbox_1_sigma_2}
%    	}%\vspace{-0.3cm}
%	\begin{subfigure}[b]{0.48\textwidth}
%		\includegraphics[width=\textwidth]{artificial_bbox_2}
%		\caption{Multilayer Perceptron}
%		\label{fig:artificial_bbox_2}
%	\end{subfigure}
	\caption{
	%Two black boxes trained on 2D artificial data with a target instance (marked as a star). 
	Left: A logistic regression classifier and a neighborhood (denoted by + marks) generated on a 2D discrete surrogate space. 
	%where $\eta(\hat{x}[i]) = \mu_i$ whenever $\hat{x}[i] = 0$, and $\mu_i$ is the empirical mean of $i$-th feature in the data.
	Center and right: Two LIME explanations.
	The gradient of each of these functions at the target example  (denoted by the * mark) is orthogonal to the border between white area and black area.
	The explanation in the middle captures the black box's gradient more faithfully.
	%On the right: A multilayer perceptron classifier and a target instance.
	}
		\label{fig:artificial_black_boxes}

\end{figure*}

The human aspects of interpretability are beyond the scope of this paper; instead this study is concerned with the quality and meaningfulness of explanations from a mathematical point of view. 
As suggested by \cite{explaining-lime}, LIME computes a scaled version of the gradient $\nabla{}f$ for linear black boxes $f$. The scaling arises because the surrogate is learned on a finite number of neighbors in a discrete space, and the scaling factor   %-- tabular LIME discretizes the continuous features by default. 
%Indeed, \cite{explaining-lime} shows that the factor 
depends on $x$, $\sigma$, $\eta_x$, and \neighbors. We argue that in the absence of a reference instance (as in~\cite{shap,deeplift,ig}), 
explanations based on instantaneous gradients are meaningful and desirable because their semantics are well-defined: the \emph{surrogate gradient} $\hat{\nabla{}}f(x)$ is the contribution of each surrogate feature to $f$'s change rate at point $x$. 
%In this line of thought, linear explanations that estimate $\hat{\nabla{}}f(x)$ are meaningful and therefore desirable. 
That said, LIME does not always estimate $\hat{\nabla{}}f$ accurately as suggested by Figure~\ref{fig:artificial_black_boxes}. The figures show that the weights associated to the neighbors may yield an estimation that differs largely from the black box's actual gradient in Figure~\ref{fig:artificial_bbox_1}.  

\section{\slime}
\label{sec:smoothed-lime}

%In Section \ref{sec:impact_kernel} we showed that LIME can produce degenerated explanations when the bandwidth $\sigma$ tends to zero, because in that case LIME looks only at a small part of the neighborhood around the target. 

%To tackle this issue, we introduce an extension, called
To tackle the locality-fidelity paradox explained in Section \ref{sec:impact_kernel}, we introduce an extension of LIME, called
\slime{} (\emph{Smoothed LIME}), that we detailed in the following.

\subsection{Generic Algorithm}

 \begin{algorithm}[t!]
     \caption{\slime{} \hfill applied to black-box function $f$ at target instance $x$}\label{alg:slime}
     \begin{algorithmic}[1]
         \Require{Conversion function $\eta_x$, distribution $\nu_\sigma$ on the surrogate space}
         \Require{Number $k$ of features in the explanation, number $n$ of local examples}
         \State $\hat{\mathcal{X}} \gets \left\{\hat{z}^{(i)} : i = 1,\dots, n\right\}$, where 
         $\hat{z}^{(i)} \sim \nu_\sigma$ \textbf{for} $i = 1,\dots, n$
         \State \Return  $\mathop{argmin}_{g \in \mathcal{G}: \; \lVert \hat{\bm{\alpha}} \rVert_0 \leqslant k}
         \sum_{\hat{z} \in \hat{\mathcal{X}}} \left(f(\eta_x(\hat{z})), g(\hat{z})\right)^2
         $
     \end{algorithmic}
 \end{algorithm}

LIME may compute degenerated explanations due to two main factors: (i) the discreteness of the surrogate space, and (ii) the fact that instance generation and weighting are decoupled. Indeed, LIME first generates a discrete neighborhood $\hat{\mathcal{X}}$ (independently of $\sigma$), and then weighs the instances in $\hat{\mathcal{X}}$ using $\pi^{\sigma}_x$. In the extreme cases when $\sigma$ tends to zero, the weighting is concentrated on $\hat{x}$.
%In the extreme cases, when $\sigma$ tends to zero, the weight associated to these examples also tends to zero except for the examples $\hat{z}$ equal to the target's binary representation $\hat{x}$; in other words the weighting concentrates on the target alone.

To prevent this skewed concentration of weights, we control the locality of the explanation in a single step (see Algorithm~\ref{alg:slime}).
%we propose to directly handle the localisation of the surrogate neighbors.
Hence, we define the neighbors in the continuous space $[0,1]^{\hat{d}}$ and populate  $\hat{\mathcal{X}}$ with examples $\hat{z}$ whose distance $D$ to $\hat{x}$ is of \emph{the same magnitude} as $\sigma$. Concretely, the neighborhood $\hat{\mathcal{X}} = \{ \hat{z}^{(1)}, \dots,\hat{z}^{(n)} \}$ consists of $n$ equally-weighted instances
drawn independently from a distribution $\nu_\sigma$.
%, and each instance is assigned the same weight.
Such a design decision enables $g$ to approximate $\hat{\nabla{}}f$ when $\sigma$ tends to zero, without hindering interpretability: $g$ still combines the contributions of the surrogate features linearly, and we can still confer an interpretable meaning to the neighbors as later explained in Section~\ref{sec:concreteSLIME}.
%We argue that the latter effect is of less relevance as long as $g$ remains interpretable with clear semantic.
%Anyhow, for LIME and \slime{}, the neighborhood examples are a proxy to get the surrogate function $g$, the mitigation of their interpretability does not matters as soon as $g$ remains iterpretable.
%Concretely, the neighborhood $\hat{\mathcal{X}} = \{ \hat{z}^{(1)}, \dots,\hat{z}^{(n)} \}$ consists of $n$ instances
%drawn independently from a distribution $\nu_\sigma$, and each instance is assigned a weight of one.
Moreover, this allows controlling locality via the bandwidth of the neighborhood distribution, and not anymore through an a-posteriori weighting.
%That is, \slime{} controls the locality directly through the bandwidth of the neighborhood distribution, and not anymore through an a posteriori weighting process.

Note that \slime{} also requires the definition of new conversion functions $\eta_x$ as $\hat{\mathcal{X}}$ is now a subset of the continuous space $[0,1]^{\hat{d}}$ instead of the discrete space $\{0,1\}^{\hat{d}}$.
In Section~\ref{sec:concreteSLIME} we provide examples of proper distributions $\nu_\sigma$ and functions $\eta_x$ for images, time series, and tabular data.

\subsection{\slime{} Subsumes LIME}
%As expressed by the following Lemma, \slime{} can target the same loss as LIME.  % the sa w.r.t. LIME, as can be seen in following Lemma.
%Note that this coupling does not reduce the range of application of \slime{} w.r.t. LIME, as can be seen in following Lemma.
%Lemma \ref{th:coupling} states that there is a distribution $\nu_\sigma$ such that both expectations are minimized by the same function $g$. 

\begin{lemma}\label{th:coupling}
Let $f$ be a function and $x$ a target instance. There is a distribution $\nu_\sigma$ over $[0,1]^{\hat{d}}$  such that LIME and \slime{} are minimizing the same expected loss function.
\end{lemma}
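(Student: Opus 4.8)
The plan is to exhibit an explicit distribution $\nu_\sigma$ on $[0,1]^{\hat d}$ for which the \slime{} expected loss coincides (up to a positive multiplicative constant, which does not affect the $\mathop{argmin}$) with the LIME loss $\mathcal{L}_x(f,g)$. The key observation is that the LIME loss is already an expectation in disguise: it is a weighted sum over the discrete set $\hat{\mathcal{X}} \subseteq \{0,1\}^{\hat d}$ with weights $\pi_x^\sigma(\hat z) = \exp(-D(\hat x,\hat z)^2/\sigma^2)$. Normalising these weights by $Z = \sum_{\hat z \in \hat{\mathcal{X}}} \pi_x^\sigma(\hat z)$ turns them into a probability distribution supported on the binary cube, and $\mathcal{L}_x(f,g) = Z \cdot \EE_{\hat z \sim \nu_\sigma}\big[(f(\eta_x(\hat z)) - g(\hat z))^2\big]$ with $\nu_\sigma(\hat z) = \pi_x^\sigma(\hat z)/Z$.

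First I would make precise what "minimizing the same expected loss function" means: \slime{} with $n$ i.i.d.\ draws from $\nu_\sigma$ minimizes the empirical loss $\tfrac1n\sum_i (f(\eta_x(\hat z^{(i)})) - g(\hat z^{(i)}))^2$, whose expectation over the sampling is exactly $\EE_{\hat z \sim \nu_\sigma}[(f(\eta_x(\hat z)) - g(\hat z))^2]$. So the statement reduces to: choose $\nu_\sigma$ so that this population objective equals (a positive scalar times) $\mathcal{L}_x(f,g)$. Then I would simply take $\nu_\sigma$ to be the normalised-kernel distribution above — a discrete distribution on $\{0,1\}^{\hat d} \subseteq [0,1]^{\hat d}$, which is a legitimate choice since the lemma only asks for \emph{a} distribution on the continuous cube, not a continuous one. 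I would also note that $\eta_x$ on \slime{}'s side can be taken to restrict to the binary cube to the same $\eta_x$ LIME uses, so the two objectives are term-by-term identical after normalisation. Since $Z > 0$, the two minimisation problems in Eq.~\eqref{eq:lime} and line 2 of Algorithm~\ref{alg:slime} have the same set of minimisers under the same sparsity constraint $\lVert\hat{\bm\alpha}\rVert_0 \le k$, which completes the argument.

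The main obstacle — really the only subtle point — is bookkeeping about what "same expected loss" should be taken to mean and making sure the reduction from the sampled \slime{} objective to its expectation is the intended reading; once that is fixed, the construction is immediate and the proof is a one-line identity plus the remark that rescaling by the positive constant $Z$ preserves $\mathop{argmin}$. A secondary point worth a sentence is that $\hat{\mathcal{X}}$ in LIME is itself generated (by toggling bits), so strictly one should either condition on a fixed realisation of LIME's $\hat{\mathcal{X}}$ or let $\nu_\sigma$ be the mixture over LIME's generation process weighted by $\pi_x^\sigma$; either reading works, and I would state which one I adopt. No delicate estimates or limits are needed here — in particular the $\sigma \to 0$ discussion from Section~\ref{sec:smoothed-lime} is motivational and plays no role in this lemma.
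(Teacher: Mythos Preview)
Your proposal is correct and follows essentially the same argument as the paper: define $\nu_\sigma$ as the kernel-weighted sampling distribution normalised by $Z$, observe that the expected losses coincide up to the positive factor $Z$, and conclude the minimisers agree. The paper adopts what you call the ``mixture'' reading (it writes LIME's sampling law as $\nu = \sum_{\hat z \in \{0,1\}^{\hat d}} w_\nu(\hat z)\,\delta(\hat z)$ and sets $\nu_\sigma = \tfrac{1}{Z}\sum_{\hat z} \pi_x^\sigma(\hat z)\,w_\nu(\hat z)\,\delta(\hat z)$), which is exactly one of the two options you already flag.
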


\begin{proof}
LIME outputs a function $g$ that minimizes the loss $\mathcal{L}_x(f, g)$ which is the residual sum of squares of the examples drawn from a distribution $\nu$. The expectation of this loss function w.r.t. to a random neighborhood is
$\EE_{\hat{z}\sim\nu}\left[\pi_x^\sigma(\hat{z})\left(f(\eta_x(\hat{z}))- g(\hat{z})\right)^2\right].$
Remark that $\nu$ is a distribution on the finite space $\{0,1\}^{\hat{d}}$, then $\nu=\sum_{\hat{z} \in \{0,1\}^{\hat{d}}} w_\nu(\hat{z})\delta(\hat{z})$, where $\delta(\hat{z})$ is the Dirac distribution at point $\hat{z}$, and $w_\nu(\hat{z})$ is a positive real number.

\noindent Similarly, \slime{} returns the linear surrogate $g$ that minimizes a loss with expectation 
$\EE_{\hat{z}\sim\nu_\sigma}\left[\left(f(\eta_x(\hat{z}))- g(\hat{z})\right)^2\right].$
Let $Z$ be $\sum_{\hat{z} \in \{0,1\}^{\hat{d}}} \pi_x^\sigma(\hat{z})w_\nu(\hat{z})$. 
If we consider \slime{} with generating distribution $\nu_\sigma = \nicefrac{1}{Z}\sum_{\hat{z} \in \{0,1\}^{\hat{d}}} \pi_x^\sigma(\hat{z})w_\nu(\hat{z})\delta(\hat{z})$, then
\begin{align*}
\EE_{\hat{z}\sim\nu_\sigma}\left[\left(f(\eta_x(\hat{z}))- g(\hat{z})\right)^2\right]
&=\sum_{\hat{z} \in \{0,1\}^{\hat{d}}} \frac{\pi_x^\sigma(\hat{z}) w_\nu(\hat{z})}{Z} \left(f(\eta_x(\hat{z}))- g(\hat{z})\right)^2\\
&=\frac{1}{Z}\EE_{\hat{z}\sim\nu}\left[\pi_x^\sigma(\hat{z})\left(f(\eta_x(\hat{z}))- g(\hat{z})\right)^2\right],
\end{align*}
which concludes the proof.
\end{proof}

\begin{remark}
It follows from Lemma \ref{th:coupling} that \slime{} may be used as a placeholder for LIME. Still, the proposed distribution $\nu_\sigma$ is practical only when $d$ is small, or when $\nu_\sigma$ corresponds to a well-known distribution. Otherwise, storing the $2^{\hat{d}}$ coefficients $\pi_x^\sigma(\hat{z})w_\nu(\hat{z})$ is unpractical. Anyway, we demonstrate in Section~\ref{sec:experiments} that \slime{} with a continuous distribution is more faithful than LIME.
\end{remark}

\subsection{\slime{} and the Gradient of the Black-Box Function}
Let us assume the surrogate function $f\circ\eta_x$ to be differentiable at $\hat{x}$.
Let us also denote by $\hat{\bm{\alpha}}$ the weights of the linear model returned by \slime{} when we drop the sparseness constraint. Then for any family of continuous distributions $\nu_\sigma$ on $[0,1]^{\hat{d}}$, such that their mass concentrates on $\hat{x}$ when $\sigma$ tends to zero, $\hat{\bm{\alpha}}$ tends to the gradient $\hat{\nabla} f(x)$ of $f\circ\eta_x$ at point $\hat{x}$. An example of such family of distributions is the set $\{\mathcal{N}\left(\hat{x}, \sigma^2\pmb{I}\right), \sigma \in \mathbb{R}^+\}$ of Gaussian distributions centered at $\hat{x}$ with variance $\sigma^2\pmb{I}$, where $\pmb{I}$ is the identity matrix.

This property has two main implications. First, while LIME degenerates as $\sigma$ approaches zero, \slime{} remains well-defined for any value of $\sigma$. Secondly, we know what \slime{} is targeting when we look locally at $\hat{x}$:
$\hat{\nabla} f(x)$.
%the gradient of the surrogate function $f\circ\eta_x$ at point $\hat{x}$.
%Phrased differently, the linear approximation derived by \slime{} is the first order approximation of $f\circ\eta_x$ at point $\hat{x}$.

\begin{remark}
%There are at least two settings for which surrogate gradients are meaningless: piece-wise constant functions and functions which vary a lot\commentl{Shall we refer to them in a more strict way?}. A piece-wise constant function may be the result of black-box model implemented as a random forest \cite{breiman2001randomforest}. In such a case, \slime{} will output a zero gradient as soon as the bandwidth of the generating distribution is small enough. While the weights returned by \slime{} may be mathematically consistent for such kinds of black boxes, they are misleading for the user as they carry on information which is too local. Anyway, \slime{} also returns weights on interpretable features for higher values of $\sigma$, and these weights may prove to be more informative. \commentl{I do not understand the last sentence.}
There are settings for which surrogate gradients are meaningless: piece-wise constant functions such as random forests. %\cite{breiman2001randomforest}
%is an example of algorithm learning a piece-wise constant function. 
In such a scenario, \slime{} outputs a zero gradient as soon as the bandwidth of the generating distribution is small enough. While the weights returned by \slime{} are mathematically consistent for such kinds of models, they are useless as they carry on information that is too local. If that is the case, users may pick a higher value for $\sigma$, or resort to a rule-based surrogate~\cite{anchors}.
\end{remark}

\subsection{\slime{} Implementations}\label{sec:concreteSLIME}
Let us now discuss examples of concrete distributions $\nu_\sigma$ and functions $\eta_x$. % for image data, timeseries data and tabular data.
The generating distribution $\nu_\sigma$ is the same for image and time series datasets: the uniform distribution on $[1-\sigma,1]^{\hat{d}}$, with $\sigma \in (0,1]$.
As needed, this distribution concentrates around the surrogate target $\hat{x} = \mathds{1}^{\hat{d}}$ when $\sigma$ tends to zero.
%We note that $\nu_\sigma$ is not centered on $\hat{x}$, however this does not prevent \slime{} from returning a good linear approximation.

In regards to the conversion function $\eta_x$, we recall that for both images~\cite{lime} and time series~\cite{leftist}, LIME splits the original instance into $\hat{d}$ contiguous regions, namely super-pixels for images or fragments of fixed size for time series. Those regions define the features of the surrogate space. Given a neighbor $\hat{z} \in \hat{\mathcal{X}}$ and a surrogate feature $j$, we can project $\hat{z}$ back to the original space by interpolating the original features of the target $x$ with a baseline $x_0$, i.e., $\eta_x(\hat{z})[i] = (1-\hat{z}[j])x_0 + \hat{z}[j]x[i]$ for all the original features $i$, i.e., pixels or time measures, covered by segment $j$. We set $x_0=0$ in our experiments, i.e., the interpolation is done w.r.t. a black image and a null time series. 
%The function $\eta_x$ depends on the targeted data type.
%For images, \slime{} uses a smoothed version of LIME's conversion function to handle surrogate features in $[0,1]^{\hat{d}}$. The target image $x$ is split into a set of $\hat{d}$ meta-pixels and one surrogate feature is defined per meta-pixel. Let $i$ be a pixel in the original image and $j$ the meta-pixel it belongs to. Our conversion function $\eta_x$ scales $i$ by the value of the meta-pixel $j$:
%$\eta_x(\hat{z})[i] = \hat{z}[j]x[i].$
%Similarly, for timeseries \slime{} extends the conversion function proposed in~\cite{leftist}.
%The target instance is split into 20 segments of equal size such that each segment relates to one feature. Then, for each time-stamp $t$ in segment $j$, $\eta_x$ scales the value observed at time $t$ by the value of the $j$-th segment:
%$\eta_x(\hat{z})[t] = \hat{z}[j]x[t].$

Finally, for tabular data we consider one surrogate feature per original feature. Therefore, the generating distribution $\nu_\sigma$ is the centered multivariate Gaussian distribution with covariance $\sigma^2\pmb{I}$, and the function $\eta_x(\hat{z}) = x + \hat{z}$.

\begin{remark}
The design of a proper distribution $\nu_\sigma$ and a proper function $\eta_x$ requires the black-box model to handle examples living in a continuous space. As a consequence, \slime{} cannot be defined for text data.
\end{remark}

\section{Experiments}
\label{sec:experiments}

We now show-case the impact of the bandwidth $\sigma$ on the fidelity of LIME and \slime{} explanations. 
%The fidelity is measured based on the model predictions and on the list of features used by the model. The experiments demonstrate that the fidelity is impacted by $\sigma$, and that in most settings (i) the smaller $\sigma$ the better and (ii) by handling smaller values of $\sigma$ \slime{} is more faithful than LIME. 
We first detail our experimental setup and then elaborate on our findings.

\subsection{Experimental Settings}
\paragraph{Datasets and Black Boxes. }
%For the sake of genericity, our study consider several datasets and several black-box models which where chosen for their diversity.
We conduct our experiments on a variety of datasets, comprising Cifar10 \cite{cifar10} and MNIST \cite{mnist} for image data, the FordA and StarlightCurves time series datasets from the \emph{UEA \& UCR Time Series Classification Repository}, and the Compas and Diabetes datasets from the \emph{UCI Machine Learning Repository} for tabular data. We also consider a selection of black-box models, which may be smooth or piece-wise constant, simple or complex, interpretable or not.

\paragraph{Protocol and Metrics.} For each combination of dataset, model, and explanation module, we compute the average value of the experimental metrics for different values of $\sigma$ on the test instances of the dataset.
The experimental metrics were introduced in Section~\ref{sec:preliminaries}: the $R^2$ score for all models, and the precision/recall or the coverage for the interpretable models, i.e., those for which a ground truth is available.
All these metrics take values either in $(-\infty,1]$ or in $[0,1]$, and higher values denote higher fidelity.

\begin{figure*}[t!]
    \captionsetup[subfigure]{aboveskip=-1pt,belowskip=-1pt}
	\centering%
	\begin{subfigure}[b]{0.24\textwidth}
        \includegraphics[width=\linewidth]{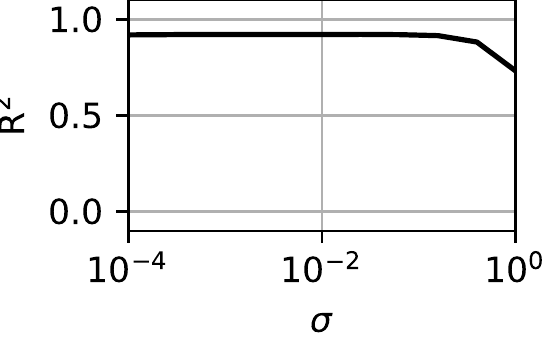}
		\caption{\slime{} on LS}
		\label{fig:s-LIME_LS}
	\end{subfigure} \hfill
	\begin{subfigure}[b]{0.24\textwidth}
        \includegraphics[width=\linewidth]{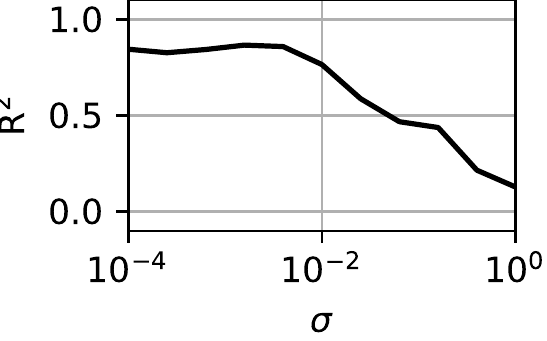}
		\caption{\slime{} on RES.}
		\label{fig:s-LIME_RESNET}
	\end{subfigure} \hfill
	\begin{subfigure}[b]{0.24\textwidth}
        \includegraphics[width=\linewidth]{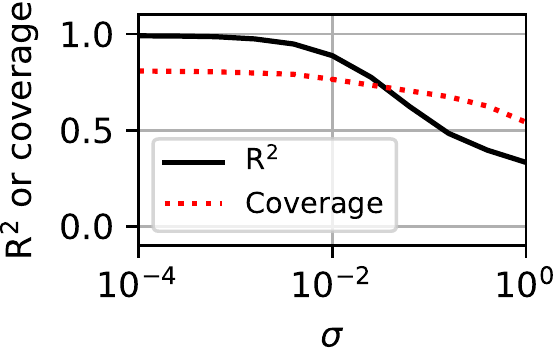}
		\caption{\slime{} on LR}
		\label{fig:s-LIME_LR}
	\end{subfigure} \hfill
	\begin{subfigure}[b]{0.24\textwidth}
        \includegraphics[width=\linewidth]{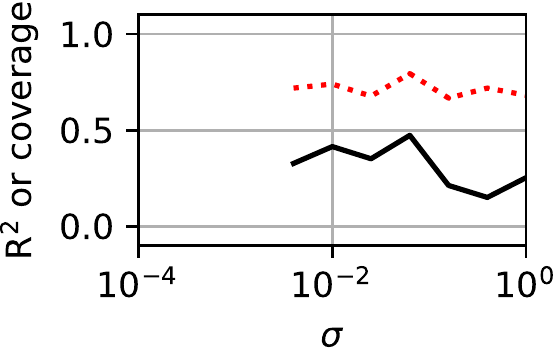}
		\caption{\slime{} on FS}
		\label{fig:s-LIME_FS}
	\end{subfigure}\\\medskip	
	\begin{subfigure}[b]{0.24\textwidth}
        \includegraphics[width=\linewidth]{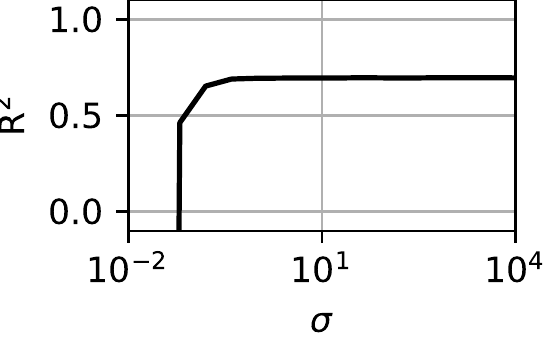}
		\caption{LIME on LS}
		\label{fig:LIME_LS}
	\end{subfigure} \hfill
	\begin{subfigure}[b]{0.24\textwidth}
        \includegraphics[width=\linewidth]{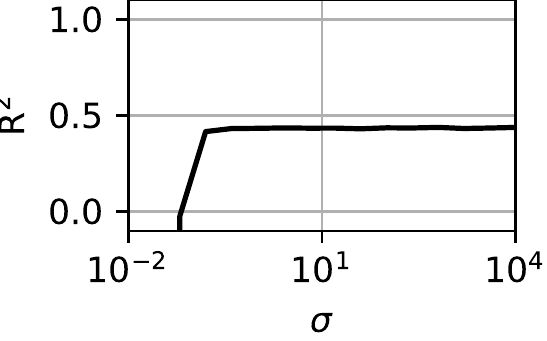}
		\caption{LIME on RESNET}
		\label{fig:LIME_RESNET}
	\end{subfigure} \hfill
	\begin{subfigure}[b]{0.24\textwidth}
        \includegraphics[width=\linewidth]{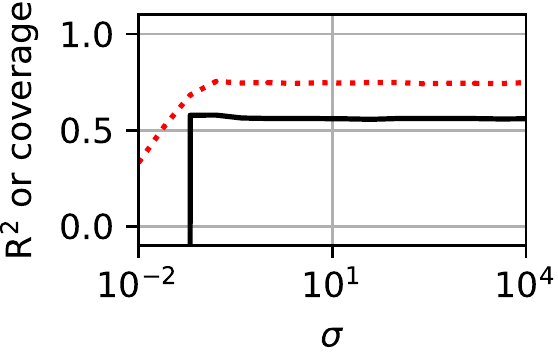}
		\caption{LIME on LR}
		\label{fig:LIME_LR}
	\end{subfigure} \hfill
	\begin{subfigure}[b]{0.24\textwidth}
        \includegraphics[width=\linewidth]{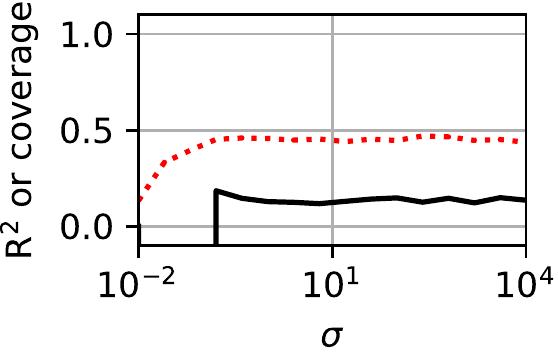}
		\caption{LIME on FS}
		\label{fig:LIME_FS}
	\end{subfigure}	
	\caption{$R^2$ and coverage vs. $\sigma$ on the StarlightCurves dataset. Each subplot corresponds to a couple (explainer, dataset). The plotted results are averaged on the instances of the test dataset. Recall that for \slime{} $\sigma$ is defined in $(0, 1]$.   
	}
	\label{fig:StarlightCurves}
\end{figure*}

\subsection{Impact of $\sigma$}

%Our first analysis aim at highlighting the impact of $\sigma$ on the fidelity of the explanation.
To study the impact of $\sigma$ on the fidelity of the LIME and \slime{} explanations, we plot the surrogate's adherence on the StarlightCurves dataset for several black-box models all using 100 random shapelets as input features. The models include Learning Shapelets (LS) \cite{learningshapelets}, RESNET \cite{resnet}, Fast Shapelets (FS) \cite{fastshapelets}, and a sparse logistic regression (LR, with $L_1$-regularization to enforce at most 10 features). The results are depicted in Figure \ref{fig:StarlightCurves}. We set $k=6$ for the number of features in explanations~\cite{lime}.
%These models are explained with both \slime{} and LIME, requesting explanations using at most $k=6$ surrogate features.

%We focus here on a unique dataset (StarlightCurves) to unable plotting of the evolution of the metrics w.r.t. $\sigma$.
%Figure \ref{fig:StarlightCurves} gathers such plots for several black-box models: LS, RESNET, FS, and LR, which respectively denote Learning Shapelets \cite{learningshapelets}, RESNET \cite{resnet}, Fast Shapelets \cite{fastshapelets}, and a logistic regression learned on 100 features, each feature being associated to a random shapelet.
%LR is penalized with a $L_1$-norm regularization to enforce the use of at most 10 features.
%These models are explained with both \slime{} and LIME, requesting explanations using at most $k=6$ surrogate features.

\begin{figure*}[t!hb]
    \captionsetup[subfigure]{aboveskip=-1pt,belowskip=-1pt}
	\centering%
	\begin{subfigure}[b]{0.4\textwidth}
        \includegraphics[width=\linewidth]{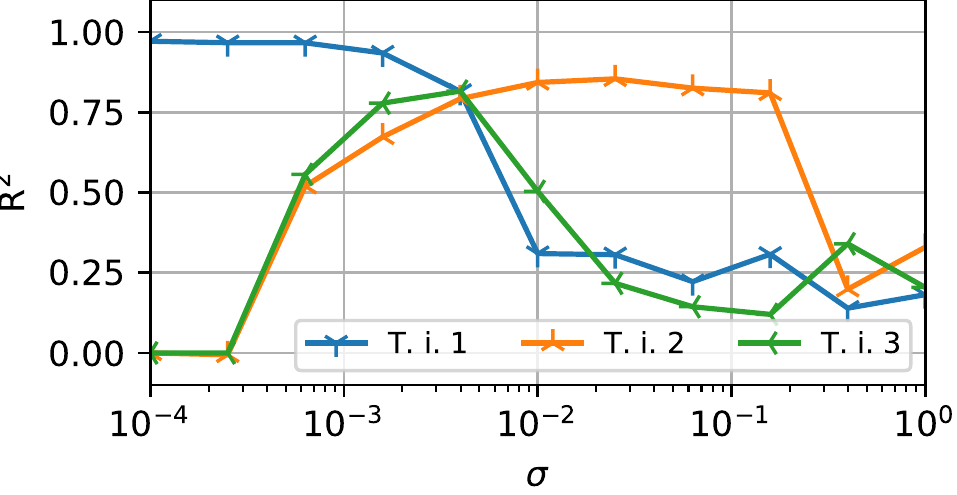}
		\caption{\slime{} on RESNET}
		\label{fig:s-LIME_RESNET-3instances}
	\end{subfigure}	\quad
	\begin{subfigure}[b]{0.4\textwidth}
        \includegraphics[width=\linewidth]{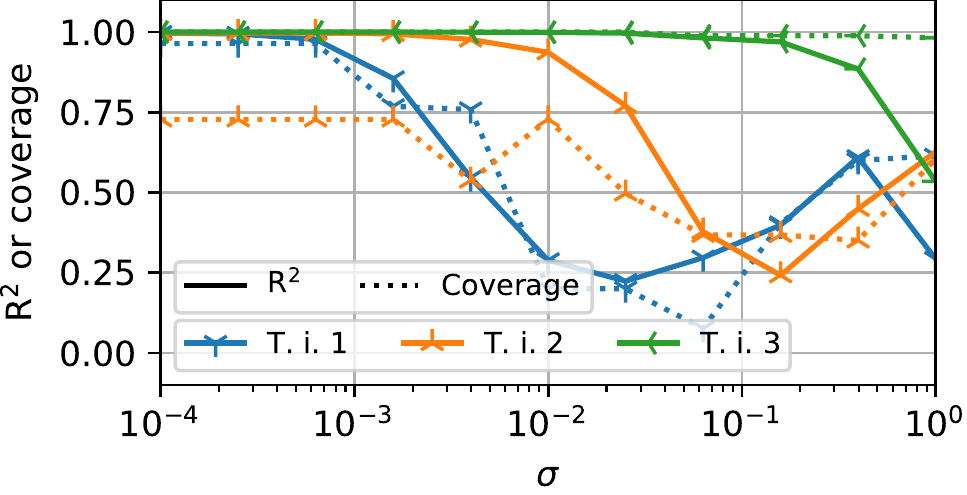}
		\caption{\slime{} on LR}
		\label{fig:s-LIME_LR-3instances}
	\end{subfigure}
	\caption{$R^2$ and coverage vs. $\sigma$ on the StarlightCurves dataset. Each subplot corresponds to a couple (explainer, dataset). Each curve corresponds to one target instance.
	}
	\label{fig:StarlightCurves-3instances}
\end{figure*}

We observe that very local \slime{} neighborhoods lead to higher adherence and coverage, except for FS. This translates into more faithful explanations as $\sigma$ approaches zero, where LIME cannot deliver proper explanations. 
%As a consequence, \slime{} delivers the most faithful explanations as it is able to generate infinitely small neighborhoods while LIME cannot deliver proper neighborhoods for small values of $\sigma$.
%With FS, the best linear explanation is LIME's one, and the best bandwith for \slime{} is about $0.1$.
In contrast, LIME achieves higher adherence and coverage for FS, because this model is a decision tree. Hence, the decision function is piece-wise constant and its gradient is zero almost every-where. When $\sigma$ is small enough, \slime{} recovers this gradient and returns an explanation with null coefficients, which has little practical value. That said, 
%But, while this explanation is faithful, it has a $R^2$ score of 0 and it is useless in term of explanation.
a wider locality can still yield a more informative explanation.

We also remark that, for complex models, the best value for $\sigma$ may depend on the target instance.
This is corroborated by Figure \ref{fig:StarlightCurves-3instances} that shows the disaggregated results for 3 instances on RESNET, a deep neural network. We can observe that the adherence is maximal when $\sigma$ is equal $10^{-4}$, $3\times10^{-3}$, and $2\times10^{-2}$ respectively. 
On the other hand, the same values of $\sigma$ are optimal for all examples on a simpler LR model.

Finally, we highlight that the coverage peaks when the adherence is maximal both at the instance (Figure~\ref{fig:s-LIME_LR-3instances}) and dataset level (Figures~\ref{fig:StarlightCurves}(cdgh)).
%, 
%maximized when $\sigma$ is chose corresponding to the highest $R^2$ score. 
This shows the pertinence of the $R^2$ score as metric to select the right level of locality.

\begin{table}[t]
%\centering
\caption{Best average recall and precision, or coverage (std. in parentheses) on different datasets and interpretable black-box classifiers.
}
\label{tab:recall}
\begin{tabular}{lllcccc}
     \textbf{Data type} & \textbf{Dataset} & \textbf{Model}   & \multicolumn{2}{c}{\textbf{\slime{}}}& \multicolumn{2}{c}{\textbf{LIME}}\\
     \cmidrule(r){4-5} \cmidrule(){6-7}
     &&& Rec. or Cov. & Precision &  Rec. or Cov. & Precision\\
     \toprule
     Timeseries
     %& GunPoint
     %& LR on shapelets  &  \textbf{0.71} (0.19) & - (-) & 0.59 (0.17)  &  - (-) \\
     %&& Fast Shapelets  &  &- (-)  &  & - (-) (Lime meilleur)  \\
     & FordA
     & LR on shapelets  &  \textbf{0.87} (0.15) & - (-) & 0.73 (0.17)  &  - (-)\\
     && Fast Shapelets  & \textbf{0.51} (0.30) & - (-) & \textbf{0.49} (0.27) & - (-) \\
     & Starlight-& LR on shapelets  &  \textbf{0.81} (0.17) & - (-) & 0.75 (0.17)  &  - (-) \\
     &Curves& Fast Shapelets  & \textbf{0.68} (0.19) &- (-)  & 0.45 (0.15) & - (-)  \\
     \midrule
     Tabular data
     & Diabetes
     & Logistic Reg.  &  \textbf{1.00} (0.00) & \textbf{1.00} (0.00) & 0.88 (0.12)  &  0.88 (0.12)\\
     && Dec. Tree  & \textbf{0.95} (0.13) & \textbf{0.81} (0.20)  & \textbf{0.94} (0.14) & \textbf{0.80} (0.20)  \\
     & Compas
     & Logistic Reg.  &  \textbf{1.00} (0.00) & \textbf{1.00} (0.00) & 0.52 (0.21)  &  0.52 (0.21) \\
     && Dec. Tree  & \textbf{0.66} (0.33) & 0.25 (0.00)  & \textbf{0.65} (0.33)  & \textbf{0.33} (0.00)  \\
     %& Wine
     %& Logistic Reg.  &  0.48 (0.15) & 0.68 (0.22) & \textbf{0.53} (0.20)  &  \textbf{0.74} (0.23) \\
     %&& Dec. Tree  & 0.80 (0.19) & 0.75 (0.18)  & \textbf{0.81} (0.19)  & 0.75 (0.18) \\
     \bottomrule
\end{tabular}
\end{table}

\begin{table}[t!hp]
\caption{Best average $R^2$ (std. in parentheses) on different datasets and black-box classifiers.
MLP stands for a neural network with one hidden layer composed of 100 neurons and logistic sigmoid activation function.
Column \emph{Int.} indicates interpretable black-box models~($\checkmark$).
FS, DT and RF are put aside as they are piecewise constant models.
}
\label{tab:r2}
\centering
\begin{tabular}{llccccccc}
     \textbf{Data type}  & \textbf{Model} & \textbf{Int.} & $k$  & \textbf{\slime{}}& \textbf{LIME} & $k$  & \textbf{\slime{}}& \textbf{LIME} \\
     \toprule
     Images &&& \multicolumn{3}{c}{MNIST} & \multicolumn{3}{c}{Cifar10}\\
     \cmidrule(r){4-6}\cmidrule(l){7-9}
     & Alexnet %\cite{alexnet}
            &  & 10& \textbf{0.80} (0.28) & 0.58 (0.20) & 10 & \textbf{0.84} (0.10)  & 0.55 (0.25) \\
     & VGG16 %\cite{vgg}
            &  & 10&  {\textbf{0.56} (0.43)}  & {\textbf{0.57} (0.21)}  & 10& \textbf{0.69} (0.13) & 0.50 (0.27) \\
     %& Fashion MNIST
     %& LR on haar  &  &   \\
     %&& Simple CNN  &  &   \\
     \midrule
     Timeseries &&& \multicolumn{3}{c}{FordA} & \multicolumn{3}{c}{StarlightCurves}\\
     \cmidrule(r){4-6}\cmidrule(l){7-9}
     & Learning Shapelets &  & 6&\textbf{0.84} (0.08) & 0.57 (0.15) & 6&\textbf{0.92} (0.07) & 0.70 (0.07)\\
     & RESNET &  & 6&\textbf{0.73} (0.20) & 0.10 (1.05) & 6&\textbf{0.87} (0.15) & 0.44 (0.15) \\
     & LR on Shapelets  & \checkmark& 6& \textbf{1.00} (0.01) & 0.56 (0.13) & 6&  \textbf{0.99} (0.02) & 0.58 (0.12)\\
     \cmidrule(r){2-3}\cmidrule(r){4-6}\cmidrule(l){7-9}
     & Fast Shapelets  & \checkmark & 6&0.15 (0.18) & \textbf{0.19} (0.14)  & 6& \textbf{0.25} (0.13) & 0.19 (0.16) \\
     %& GunPoint
     %& LS & X &\textbf{0.86} (0.13) & 0.79 (0.1)\\
     %&& RESNET & X &\textbf{0.91} (0.064) & 0.18 (0.093)\\
     %&& LR on shapelets  & \checkmark&  &\\
     %\cmidrule{3-5}
     %&& Fast Shapelets  & \checkmark&  & (Lime meilleur)  \\
     %\cmidrule{3-5}
     \midrule
     Tabular data &&& \multicolumn{3}{c}{Diabetes} & \multicolumn{3}{c}{Compas}\\
     \cmidrule(r){4-6}\cmidrule(l){7-9}
     & Logistic Regression  & \checkmark& 4& \textbf{1.00} (0.00) & \textbf{0.99} (0.01) & 11& \textbf{1.00} (0.00)  & 0.42 (0.23) \\
     & MLP  &  & 4&  \textbf{0.97} (0.03) & 0.72 (0.13) & 6 & \textbf{0.79} (0.01)  & 0.31 (0.16)\\
     \cmidrule(r){2-3}\cmidrule(r){4-6}\cmidrule(l){7-9}
     & Decision Tree & \checkmark & 3& \textbf{0.46} (0.09)  & \textbf{0.46} (0.10) & 3& 0.34 (0.00) & \textbf{0.36} (0.00)  \\
     & Random Forest %\cite{breiman2001randomforest}
        &  & 4& \textbf{0.62} (0.03) & 0.58 (0.12) & 6 & \textbf{0.30} (0.01)  & \textbf{0.30} (0.02) \\
     %\cmidrule{3-5}
     %& Iris 
     %& Random Forests &  & (Lime meilleur)  \\
     %&& MLP (logistic)  &  & (Lime meilleur)  \\   \cmidrule{3-5}
     %&& Logistic Reg.  &  & (Lime meilleur)  \\
     %&& Dec. Tree  &  & (Lime meilleur)  \\ \cmidrule{3-5}
     %& Wine ($k=6$)
     %& Random Forests & 0.64 (0.21) & 0.54 (0.11)  \\
     %&& MLP (logistic)  & 0.98 (0.02)  & 0.59 (0.22) \\
     %\cmidrule{3-5}
     %&& Logistic Reg.  & 0.97 (0.03)  & 0.63 (0.11) \\
     %&& Dec. Tree  & 0.30 (0.13)  & 0.30 (0.13) \\
     \bottomrule      
\end{tabular}
\end{table}

\subsection{Fidelity Analysis}

Tables \ref{tab:recall} and \ref{tab:r2} show the average scores obtained by \slime{} and LIME when $\sigma$ is selected to maximize the aggregated adherence ($R^2$ score) in the test instances of the experimental datasets. Table~\ref{tab:recall} shows recall, precision, and coverage for the interpretable models, whereas Table~\ref{tab:r2} provides the $R^2$ score for all models.
%This study on a larger set of data strengthens the conclusions of the study limited to StarlightCurve dataset.

Firstly, we remark that \slime{}'s explanations are strictly more faithful than LIME's except for piecewise constant models (FS, DT, and RF).
That said, this does not prevent \slime{} from achieving higher adherence for such models on some datasets when we look at a larger vicinity.

Secondly, the $R^2$ score is a good proxy to predict the best neighborhood in terms of recall, precision, or coverage.
This is a strong result from an application point of view. Practitioners are mostly interested by the features that are actually used by the black-box model. %and local linear explanations are tools to identify those. 
%and they identify these features based on the local linear explanation.
%In other world, practitioners hope for high values of recall, precision, or coverage.
For cases where those actual features are unknown,
%Our results demonstrate that, while these metrics cannot be measured, 
the $R^2$ score enables the computation of faithful linear explanations that can identify the important features.

\section{Related Work}
\label{sec:related-work-julien}
%An important body of literature has studied the impact of the different components and parameters of LIME on the quality of the resulting explanations. This has lead to multiple extensions of the original approach that we survey in this section. 
\paragraph{Feature-attribution explanations.} Methods such as DeepLIFT~\cite{deeplift}, Integrated Gradients (IG)~\cite{ig}, SHAP~\cite{shap}, or LIME~\cite{lime} compute importance local attribution scores for the features of a black-box ML model. Among those, SHAP and LIME are model-agnostic and compute linear surrogates learned from artificial neighbors. Despite these similarities, the semantics of their explanations are different as confirmed by existing studies~\cite{leaf}. While LIME approximates -- often coarsely -- the instantaneous gradient of the black box w.r.t. the input features~\cite{explaining-lime}, SHAP computes -- or rather approximates -- the Shapley values~\cite{shap}, which quantify the feature contributions to the difference between the model's answer on a baseline instance and the target. The baseline depends on the use case, e.g., a single-color image (represented by the vector $\mathbb{0}^{\hat{d}}$ in the surrogate space). This makes SHAP and LIME complementary methods rather than competitors. 
%In~\cite{shap}, it is shown that we can compute the Shapley values by generating artificial instances as in LIME, dropping the regularization, and concentrating the training weights on the closest and farthest instances in the neighborhood $\hat{\mathcal{X}}$. 
%SHAP's explanations offer interesting theoretical guarantees such as local accuracy, i.e., the surrogate is always accurate on the target instance. 

\paragraph{LIME Extensions.} 
An important body of literature has studied the impact of the different components and parameters of LIME on the quality of the explanations. This has led to multiple extensions of the original LIME algorithm.
As opposed to this work, some extensions~\cite{ALIME,DLIME,OptiLIME} tackle the instability of LIME, i.e., the fact that two executions of the algorithm with the same input may not deliver the same explanation. This instability originates from the randomness in the different steps of the approach, e.g., sampling in the surrogate space, non-deterministic conversion functions, etc. On those grounds, the techniques to tackle instability are diverse. ALIME~\cite{ALIME}, for example, resorts to a denoising auto-encoder to create a surrogate space that characterizes the data manifold more accurately. 
%By doing so, ALIME can learn the surrogate on realistic neighbor instances boosting both fidelity -- defined by the $R^2$ score -- and instability.
DLIME~\cite{DLIME}, in contrast, applies hierarchical agglomerative clustering on the training instances to identify the closest neighbors of the target and use them to learn the surrogate. 
In another line of thought, the authors of OptiLIME~\cite{OptiLIME} study the relationship between the bandwidth $\sigma$, the adherence, and the instability of LIME. Similar to our work, the authors highlight the importance of choosing the right $\sigma$ in a per-instance basis.  Moreover, they show an inverse relationship between $\sigma$ and explanation instability. 
This observation constitutes the basis of a method to select the bandwidth $\sigma$ that yields the best trade-off between adherence and instability. 
We highlight that all these approaches have been proposed only for tabular data, and that none of them takes into account recall, precision, or coverage fidelity.   

Other extensions of LIME have focused entirely on improving fidelity. ILIME~\cite{iLIME} proposes the use of influence functions in order to up-weight the neighbors that play a higher role in the linear fit of the surrogate. 
%We believe that such a technique could be applied in combination with s-LIME.
QLIME-A~\cite{QLIME} proposes to extend the local surrogate to report quadratic relationships for cases where a linear surrogate is still inaccurate. While quadratic functions do exhibit higher fit capabilities, their interpretability in general settings is debatable.

\section{Conclusion}
\label{sec:conclusion}
In this paper we have introduced \slime{}, an extension of LIME that reconciles locality and fidelity for linear explanations. We argue that LIME can produce degenerated explanations as locality -- controlled through the bandwidth $\sigma$ -- increases. We solve this paradox by means of a new neighbor generation process on a continuous surrogate space. Our experiments on image, time series, and tabular data suggest that this strategy can provide even more faithful linear explanations with gradient-compliant semantics that are not affected by high locality. As a future work, we envision to investigate the fidelity of $\slime{}$ explanations with other generating distributions and conversion functions, as well as to study the impact on the stability of the explanations.

\subsubsection{Acknowledgements}
%Please place your acknowledgments at the end of the paper, preceded by an unnumbered run-in heading (i.e. 3rd-level heading).
This research was partially supported by
the Inria Project Lab “Hybrid Approaches for Interpretable AI” (HyAIAI), the project “Framework for Automatic Interpretability in Machine Learning” financed by the French National Research Agency (ANR JCJC FAbLe), and the network on the foundations of trustworthy AI, integrating learning, optimisation, and reasoning (TAILOR) financed by the EU’s Horizon 2020 research and innovation program under agreement 952215.

\bibliographystyle{splncs04}
\bibliography{references}

\begin{thebibliography}{10}
\providecommand{\url}[1]{\texttt{#1}}
\providecommand{\urlprefix}{URL }
\providecommand{\doi}[1]{https://doi.org/#1}

\bibitem{leaf}
Amparore, E., Perotti, A., Bajardi, P.: {To Trust or not to Trust an
  Explanation: Using LEAF to Evaluate Local Linear XAI Methods}. PeerJ Computer
  Science  \textbf{7} (2021). \doi{10.7717/peerj-cs.479},
  \url{http://dx.doi.org/10.7717/peerj-cs.479}

\bibitem{bodria2021benchmarking}
Bodria, F., Giannotti, F., Guidotti, R., Naretto, F., Pedreschi, D.,
  Rinzivillo, S.: {Benchmarking and Survey of Explanation Methods for Black Box
  Models}. CoRR  \textbf{abs/2102.13076} (2021)

\bibitem{QLIME}
Bramhall, S., Horn, H., Tieu, M., Lohia, N.: {QLIME-A: Quadratic Local
  Interpretable Model-Agnostic Explanation Approach}. SMU Data Science Rev
  \textbf{3} (2020)

\bibitem{AI-under-law}
Doshi{-}Velez, F., Kortz, M., Budish, R., Bavitz, C., Gershman, S., O'Brien,
  D., Schieber, S., Waldo, J., Weinberger, D., Wood, A.: {Accountability of
  {AI} Under the Law: The Role of Explanation}. CoRR  \textbf{abs/1711.01134}
  (2017), \url{http://arxiv.org/abs/1711.01134}

\bibitem{iLIME}
ElShawi, R., Sherif, Y., Al-Mallah, M., Sakr, S.: {ILIME: Local and Global
  Interpretable Model-Agnostic Explainer of Black-Box Decision}. In: ADBIS
  (2019)

\bibitem{explaining-lime}
Garreau, D., von Luxburg, U.: {Explaining the Explainer: {A} First Theoretical
  Analysis of {LIME}}. In: AISTATS (2020)

\bibitem{learningshapelets}
Grabocka, J., Schilling, N., Wistuba, M., Schmidt-Thieme, L.: {Learning
  Time-Series Shapelets}. In: KDD (2014)

\bibitem{leftist}
Guillemé, M., Masson, V., Rozé, L., Termier, A.: {Agnostic Local Explanation
  for Time Series Classification}. In: ICTAI (2019)

\bibitem{jia2021studying}
Jia, Y., Frank, E., Pfahringer, B., Bifet, A., Lim, N.: {Studying and
  Exploiting the Relationship Between Model Accuracy and Explanation Quality}.
  In: ECML/PKDD (2021)

\bibitem{cifar10}
Krizhevsky, A.: {Learning Multiple Layers of Features from Tiny Images}. Tech.
  rep., Canadian Institute for Advanced Research (2009)

\bibitem{mnist}
LeCun, Y., Cortes, C.: {{MNIST} Handwritten Digit Database}.
  \url{http://yann.lecun.com/exdb/mnist/} (2010)

\bibitem{shap}
Lundberg, S.M., Lee, S.: {A Unified Approach to Interpreting Model
  Predictions}. In: NeurIPS (2017)

\bibitem{remote-explainability}
Merrer, E.L., Tr{\'{e}}dan, G.: {The Bouncer Problem: Challenges to Remote
  Explainability}. CoRR  \textbf{abs/1910.01432} (2019),
  \url{http://arxiv.org/abs/1910.01432}

\bibitem{fastshapelets}
Rakthanmanon, T., Keogh, E.: {Fast Shapelets: A Scalable Algorithm for
  Discovering Time Series Shapelets}. In: SDM (2013)

\bibitem{lime}
Ribeiro, M.T., Singh, S., Guestrin, C.: {Why should I trust you?: Explaining
  the Predictions of Any Classifier}. In: KDD (2016)

\bibitem{anchors}
Ribeiro, M.T., Singh, S., Guestrin, C.: {Anchors: High-Precision Model-Agnostic
  Explanations}. In: AAAI (2018),
  \url{https://www.aaai.org/ocs/index.php/AAAI/AAAI18/paper/view/16982}

\bibitem{ALIME}
Shankaranarayana, S.M., Runje, D.: {{ALIME:} Autoencoder Based Approach for
  Local Interpretability}. CoRR  \textbf{abs/1909.02437} (2019),
  \url{http://arxiv.org/abs/1909.02437}

\bibitem{deeplift}
Shrikumar, A., Greenside, P., Kundaje, A.: {Learning Important Features Through
  Propagating Activation Differences}. In: ICML (2017),
  \url{http://proceedings.mlr.press/v70/shrikumar17a.html}

\bibitem{ig}
Sundararajan, M., Taly, A., Yan, Q.: {Axiomatic Attribution for Deep Networks}.
  CoRR  \textbf{abs/1703.01365} (2017)

\bibitem{OptiLIME}
Visani, G., Bagli, E., Chesani, F.: {OptiLIME: Optimized {LIME} Explanations
  for Diagnostic Computer Algorithms}. In: AIMLAI@CIKM (2020),
  \url{http://ceur-ws.org/Vol-2699/paper03.pdf}

\bibitem{resnet}
Wang, Z., Yan, W., Oates, T.: {Time Series Classification from Scratch with
  Deep Neural Networks: {A} Strong Baseline}. CoRR  \textbf{abs/1611.06455}
  (2016), \url{http://arxiv.org/abs/1611.06455}

\bibitem{DLIME}
Zafar, M.R., Khan, N.M.: {{DLIME:} {A} Deterministic Local Interpretable
  Model-Agnostic Explanations Approach for Computer-Aided Diagnosis Systems}.
  CoRR  \textbf{abs/1906.10263} (2019), \url{http://arxiv.org/abs/1906.10263}

\end{thebibliography}
\end{document}